\title{The Structured Weighted Violations Perceptron Algorithm}
 \author{Rotem Dror \and Roi Reichart \\
 Faculty of Industrial Engineering and Management, Technion, IIT \\
        {\tt \{rtmdrr@campus$|$roiri@ie\}.technion.ac.il}}
\date{}
\newtheorem{definition}{Definition}
\newtheorem{theorem}{Theorem}
\DeclareMathOperator*{\argmax}{arg\,max}
\newcommand{\isection}[2]{\section{#1}\label{ssec:#2}}
\newcommand{\isubsection}[2]{\subsection{#1}\label{ssec:#2}}
\newcommand{\secref}[1]{Section~\ref{ssec:#1}}
\newcommand{\my}[1]{}
\newcommand{\com}[1]{}
\begin{document}

\maketitle
%--------------------------------------------------------------------------%
\begin{abstract}
We present the \textit{Structured Weighted Violations Perceptron (SWVP)} algorithm, 
a new structured prediction algorithm 
that generalizes the Collins Structured Perceptron (CSP, \cite{CollinsPerceptron}).
Unlike CSP, the update rule of SWVP explicitly 
exploits the internal structure of the predicted labels. 
We prove the convergence of SWVP for linearly separable training sets, 
%SWVP converges to a weight 
%vector that separates the data, under certain conditions on the parameters of the algorithm. 
provide mistake and generalization bounds, 
%on: (a) the number of updates in the separable case; 
%(b) mistakes in the non-separable case; and (c) the probability to misclassify 
%an unseen example (generalization), 
and show that in the general case these bounds are tighter than those of the CSP special case.
In synthetic data experiments with data drawn from an HMM, various variants of SWVP 
substantially outperform its CSP special case. 
SWVP also provides encouraging initial dependency parsing results.
\end{abstract}

%--------------------------------------------------------------------------%
\isection{Introduction}{sec:intro}
%--------------------------------------------------------------------------%

The structured perceptron (\cite{CollinsPerceptron}, henceforth denoted CSP) 
is a prominent training algorithm for structured prediction models in NLP, due to its effective parameter 
estimation and simple implementation. It has been utilized in numerous NLP applications 
including word segmentation and POS tagging \cite{Zhang:08},
dependency parsing \cite{Koo:10,Goldberg:10,Martins:13}, 
semantic parsing \cite{Zettlemoyer:07} and information extraction \cite{Hoffmann:11,Reichart:12}, if to name just 
a few.

Like some training algorithms in structured prediction (e.g. structured SVM \cite{Taskar:04,SSVM}, 
MIRA \cite{Crammer:03} and LaSo \cite{Daume:05}), CSP considers in its update rule the difference 
between {\it complete} predicted and gold standard labels (Sec. 2). 
Unlike others (e.g. factored MIRA \cite{Mcdonald:05,Mcdonald:05b} and dual-loss based methods 
\cite{Meshi:10}) it does not exploit the structure of the predicted label. 
This may result in valuable information being lost. 

Consider, for example, the gold and predicted 
dependency trees of Figure~\ref{fig:dependencyExample}. The substantial difference between  
the trees may be mostly due to the difference in roots ({\it are} and {\it worse}, respectively). 
Parameter update w.r.t this mistake may thus be more useful than an update 
w.r.t the complete trees.
%Directly fixing this mistake may thus be in place instead of updating
%with respect to the full range of differences.
%have a substantial positive impact on the rest of the predicted tree. 

In this work we present a new perceptron algorithm with an update rule that exploits the structure of a predicted label 
when it differs from the gold label (\secref{sec:SWVP}). 
Our algorithm is called \textit{The Structured Weighted Violations Perceptron (SWVP)} 
as its update rule is based on a weighted sum of updates w.r.t {\it violating assignments} 
and {\it non-violating assignments}: assignments to the input example, derived from the predicted label,
that score higher (for violations) and lower (for non-violations)
than the gold standard label according to the current model. 
 %that score higher and lower, respectively, 

Our concept of {\it violating assignment} is based on \newcite{Huang:12} 
that presented a variant of the CSP algorithm where the argmax inference problem is replaced with a violation 
finding function. Their update rule, however, is identical to that of the CSP algorithm.
%we propose a novel update rule that 
%exploits the internal structure of the model's prediction regardless of the way this prediction was generated. 
Importantly, although CSP and the above variant do not exploit the internal 
structure of the predicted label, they are special cases of SWVP.

In \secref {sec:theory} we prove that for a linearly separable training set, 
SWVP converges to a linear separator of the data under certain conditions on the parameters of the algorithm, 
that are respected by the CSP special case.
%This is in line with the previous result stating that CSP converges 
%for any linearly separable training set \cite{CollinsPerceptron}.
We further prove mistake and generalization bounds for SWVP, and show that in the general case 
the SWVP bounds are tighter than the CSP's.

In \secref{sec:swvp-variants} we show that SWVP allows 
\textit{aggressive} updates, that exploit only violating assignments derived from the predicted label, 
and more \textit{balanced} updates, that exploit both violating and non-violating assignments.
In experiments with synthetic data generated by an HMM, we demonstrate that various SWVP variants substantially 
outperform CSP training. We also provide initial encouraging dependency parsing results, indicating the potential 
of SWVP for real world NLP applications.

%--------------------------------------------------------------------------%
%\isection{Previous Works}{sec:previous}
%--------------------------------------------------------------------------%

\my{NLP structured prediction - add  two paragraphs about this inside subsection of structured prediction}

\com{
\newcite{Huang:12} showed that instead of solving the $argmax$ problem, 
it is sufficient for CSP to find a \textit{violation}, defined as follows.
\begin{definition}
\label{def1}
A triple $(x,y,y^{*})$ is said to be a \textbf{violation} with 
respect to a training example $(x,y)$ and a weight vector $\textbf{w}$ if for $y^{*} \in \mathcal{Y}(x)$ it 
holds that $y^{*}\neq y$ and $\textbf{w}\cdot\Delta\phi(x,y,y^{*}) \le 0$.
\end{definition}

Therefore, the $argmax$ function in CSP can be replaced with a \textit{findViolation} function that 
does not need to perform full inference on $x$. This observation has important implications on the 
tractability of the algorithm in cases where the $argmax$ function is hard to solve (e.g. \cite{Kulesza:07,Hazan:10}). 

For example, \cite{Collins-roark:04} presented a heuristic \textit{early updates} inference strategy for 
incremental constituency parsing where the update is performed based on the first time a violation is found 
in their incremental search procedure. This strategy has been shown by \cite{Huang:12} to be a violation 
finding strategy with which CSP is proved to converge and keep the same theoretical guarantees presented 
by \cite{CollinsPerceptron}. These authors then presented other violation finding strategies that fall into this framework.
% a representative one is the \textbf{max-violation update} where the label $z$ is given as: 
% $z = \argmin_{y' \in \mathcal{Y}(x), \textbf{w}\cdot \Delta\phi(x,y,y') \le 0}\textbf{w}\cdot\Delta\Phi(x,y,y')$.

we suggest exploiting the internal structure of the inferred 
sequence, $y^{*}$, in an independent manner to its formation (meaning we can use each one of the methods suggested 
above for the inference process), in order to create a new structured prediction algorithm 
called the structured weighted violations perceptron. Our algorithm generalizes the CSP algorithm 
and improves its guarantees and performance.
}

\isection{The Collins Structured Perceptron}{sec:CSP}

In structured prediction the task is to find a mapping 
$f: \mathcal{X} \rightarrow \mathcal{Y}$, where $y \in \mathcal{Y}$ is a structured object rather than a scalar, 
and a feature mapping $\phi(x,y): \mathcal{X} \times \mathcal{Y}(x) \to\mathbb{R}^d$ is given.
%It is standard to assume an input feature representation $\phi(x,y): \mathcal{X} \times \mathcal{Y}(x) \to\mathbb{R}^d$.
In this work we denote $\mathcal{Y}(x) = \{y' | y' \in {D_Y}^{L_{x}} \}$, where $L_x$, a scalar, is the 
size of the allowed output sequence for an input $x$ and $D_Y$ is the domain of $y'_i$ for 
every $i \in \{1, \ldots L_x\}$.
\footnote{In the general case $L_x$ is a set of output sizes, which 
may be finite or infinite (as in constituency parsing \cite{Collins:97}).}  
%A representative structured problem with an unbounded 
%output space is phrase structure parsing \cite{Collins:97}.}  
Our results, however, hold for the general case of an output space with variable size vectors as well. 

%\isubsection{The Collins Structured Perceptron (CSP)}{sec:CSP}

The CSP algorithm (Algorithm~\ref{alg:structured_perceptron})
aims to learn a parameter (or weight) vector $\textbf{w}\in\mathbb{R}^d$, that separates the 
training data, i.e. for each training example $(x,y)$ it holds that: 
$y = \argmax_{y'\in \mathcal{Y}(x)}\textbf{w}\cdot\phi(x,y')$.
To find such a vector the algorithm iterates over the training set examples and solves the above inference ($argmax$) 
problem. If the inferred label $y^{*}$ differs from the gold label $y$ the 
update $\textbf{w} = \textbf{w} + \Delta\phi(x,y,y^{*})$ is performed. 
%(see algorithm~\ref{alg:structured_perceptron}). 
For linearly separable training data (see definition~\ref{def:separable}), CSP is proved to converge 
to a vector $\textbf{w}$ separating the training data.

\newcite{Collins-roark:04} and \newcite{Huang:12} expanded the CSP algorithm by proposing various alternatives to the
argmax {\it inference} problem which is often intractable in structured prediction problems (e.g. in high-order 
graph-based dependency parsing \cite{Mcdonald:06}). 
The basic idea is replacing the argmax problem with the search for  
a {\it violation}: an output label that the model scores higher than the gold standard label. The update rule 
in these CSP variants is, however, identical to the CSP's.
%that is scored higher by the model than the gold standard label.
We, in contrast, propose a novel update rule that 
exploits the internal structure of the model's prediction regardless of the way this prediction is generated. 
%In \secref{sec:theory} we show that the theoretical guarantees of SWVP hold for a rich family of 
%inference functions.

%In this paper, we extend this observation of \newcite{Huang:12} regarding parameter update with 
%and show that \textbf{w} can actually be updated with respect to linear combinations of mixed assignments, 
%under 

\begin{algorithm}[t!]
	\caption{The Structured Perceptron (CSP)}
	\label{alg:structured_perceptron}
\scriptsize
	\begin{algorithmic}[1]
		\STATEx {\bfseries Input:} data $D = \{x^i,y^i\}_{i=1}^{n}$, feature mapping $\phi$
		\STATEx {\bfseries Output:} parameter vector $\textbf{w}\in \mathbb{R}^d$
		\STATEx {\bfseries Define:} $\Delta\phi(x,y,z) \triangleq \phi(x,y) - \phi(x,z)$
		\STATE Initialize $\textbf{w}=0$.
		\REPEAT
		\FOR{each $(x^i,y^i)\in D$}
		\STATE $y^{*}=\argmax\limits_{y'\in \mathcal{Y}(x^i)} \textbf{w}\cdot\phi(x^i,y')$
		\IF{$y^{*} \neq y^i$} 
		\STATE $\textbf{w} = \textbf{w} + \Delta\phi(x^i,y^i,y^{*})$
		\ENDIF
		\ENDFOR
		\UNTIL{Convergence}
	\end{algorithmic}
\end{algorithm} 

%--------------------------------------------------------------------------%
\isection{The Structured Weighted Violations Perceptron (SWVP)}{sec:SWVP}
%--------------------------------------------------------------------------%

SWVP exploits the internal structure of a predicted label $y^{*} \neq y$ for a training example $(x,y) \in D$, by 
updating the weight vector with respect to sub-structures of $y^{*}$. We start by presenting the fundamental concepts 
at the basis of our algorithm.

\subsection{Basic Concepts}

\paragraph{Sub-structure Sets} We start with two fundamental definitions: 
{\bf (1)} An individual {\it sub-structure} of a 
structured object (or label) $y \in {D_Y}^{L_{x}}$, denoted with $J$, is defined to be a subset of 
indexes $J \subseteq [L_x]$;\footnote{We use the notation $[n] = \{1,2, \ldots n\}$.}
and {\bf (2)} A {\it set of substructures} for a training example $(x,y)$, 
denoted with $JJ_x$, is defined as $JJ_x \subseteq 2^{[L_x]}$.

\paragraph{Mixed Assignment} 
We next define the concept of a \textit{mixed assignment}:
\begin{definition} 
\label{MA}
For a training pair $(x,y)$ and a predicted label $y^{*} \in \mathcal{Y}(x)$, $y^{*} \neq y$, 
a \textbf{mixed assignment ($MA$)} vector denoted as $m^J(y^{*},y)$ is defined with respect to 
$J \in JJ_x$ as follows:
\begin{footnotesize}
$$m_k^J(y^{*},y)=\begin{cases}
y^{*}_k & k\in J \\
y_k & else
\end{cases}$$
\end{footnotesize}
\end{definition}
That is, a mixed assignment is a new label, derived from the predicted label $y^{*}$, that is identical to $y^{*}$ in 
all indexes in $J$ and to $y$ otherwise. For simplicity we denote $m^J(y^{*},y) = m^J$ when the 
reference $y^{*}$ and $y$ labels are clear from the context.

Consider, for example, the trees of Figure~\ref{fig:dependencyExample}, assuming that the top tree is $y$, 
the middle tree is $y^{*}$ and $J = [2,5]$.\footnote{We index the dependency tree words from 1 onwards.} 
In the $m^J(y^{*},y)$ (bottom) tree the heads of all the words are identical to those of the top tree, except 
for the heads of {\it mistakes} and of {\it then}. 
%The resulting tree is the bottom tree of the figure.
\my{RR: (1) the two roots problem; (2) the general problem.}

\paragraph{Violation} The next central concept is that of a violation, 
originally presented by \newcite{Huang:12}:
\begin{definition}
\label{def1}
A triple $(x,y,y^{*})$ is said to be a \textbf{violation} with 
respect to a training example $(x,y)$ and a parameter vector $\textbf{w}$ if for $y^{*} \in \mathcal{Y}(x)$ it 
holds that $y^{*}\neq y$ and $\textbf{w}\cdot\Delta\phi(x,y,y^{*}) \le 0$.
\end{definition}

The SWVP algorithm distinguishes between  $MA$s that are violations, and ones that are not.
For a triplet $(x,y,y^{*})$ and a set of substructures $JJ_x \subseteq 2^{[L_x]}$ 
we provide the following notations:
\begin{footnotesize}
\[ I(y^{*},y,JJ_x)^v \text{ } = \{J \in JJ_x|m^J \neq y, \textbf{w}\cdot\Delta\phi(x,y,m^J)\le 0 \} \]
\[I(y^{*},y,JJ_x)^{nv} = \{J\in JJ_x| m^J \neq y,  \textbf{w}\cdot\Delta\phi(x,y,m^J)> 0 \}\]
\end{footnotesize}
This notation divides the set of substructures into two subsets, one consisting of the 
substructures that yield violating MAs and one consisting of the substructures that yield non-violating MAs. 
Here again when the reference label $y^{*}$ and the set $JJ_x$ are known 
we denote: $I(y^{*},y,JJ_x)^v = I^v$, $I(y^{*},y,JJ_x)^{nv} = I^{nv}$ and $I=I^v\cup I^{nv}$.

%demonstrated how it is possible to update the parameter vector according 
%to a single set of edges that yields a violation, but this solution does not take into account multiple violations 
%that could be derived from a single predicted output. 

\paragraph{Weighted Violations} 

The key idea of SWVP is the exploitation of the internal structure of the predicted label in the update rule.
For this aim at each iteration we define the set of substructures, $JJ_{x}$, 
and then, for each $J \in JJ_{x}$, update the parameter vector, $\textbf{w}$, with respect to the 
mixed assignments, $MA^{J}$'s. This is a more flexible setup compared to CSP, as we can 
update with respect to the predicted output (if it is a violation, as is promised if inference is performed via argmax), 
if we wish to do so, as well as with respect to other mixed assignments.

Naturally, not all mixed assignments are equally important for the update rule. 
Hence, we weigh the different updates using a weight vector $\gamma$. 
%In \secref{sec:theory} we show what are the conditions on $\gamma$ under which SWVP converges 
%for linearly seprable training sets. In \secref{sec:swvp-variants} we discuss various selection strategies for 
%$\gamma$. 
This paper therefore extends the observation of \newcite{Huang:12} that perceptron parameter update
can be performed w.r.t violations (\secref{sec:CSP}), by showing that $\textbf{w}$ can actually 
be updated w.r.t linear combinations of mixed assignments, under certain conditions on the selected weights.

%\newcite{Huang:12} proved that if the parameter vector of CSP is updated with respect to a violation instead of 
%with respect to the solution of the argmax inference problem under the current model paramteres, 
%the algorithm still has the same convergence properties. 
%In this paper, we extend this observation and show that $\textbf{w}$ can actually be updated with respect to 
%linear combinations of mixed assignments, under certain conditions on the selected weights.
\begin{center}
	\includegraphics[width=0.5\textwidth]{./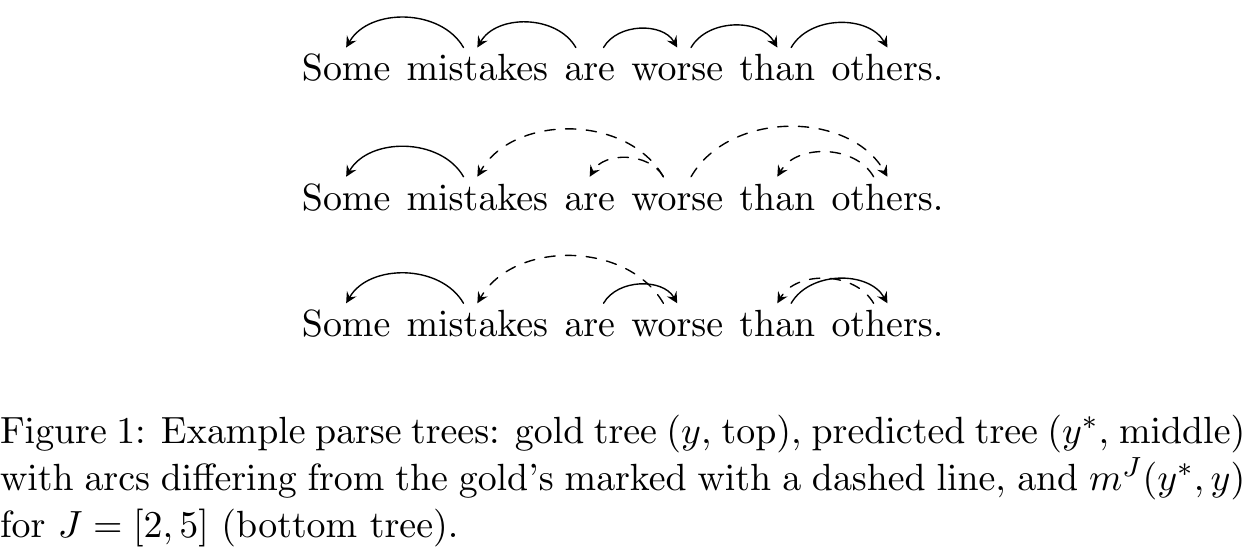}
	\label{fig:dependencyExample}
\end{center}

\subsection{Algorithm}

%\begin{figure}[t!]
%\centering
%\scriptsize
%%\resizebox{1.1\width}{.75\height}
%{\begin{dependency}[hide label, arc edge]
%\begin{deptext}
%Some \& mistakes \& are \& worse \& than \& others. \\
%\end{deptext}
%%\deproot[show label]{3}{root}
%\depedge{2}{1}{}
%\depedge{3}{2}{}
%\depedge{3}{4}{}
%\depedge{4}{5}{}
%\depedge{5}{6}{}
%\label{1}
%\end{dependency}}\\ 
%%\resizebox{1.1\width}{.75\height}
%{\begin{dependency}[hide label, arc edge]
%\begin{deptext}
%Some \& mistakes \& are \& worse \& than \& others. \\
%\end{deptext}
%%\deproot[show label, edge style={dashed,ultra thick}]{4}{root}
%\depedge{2}{1}{}
%\depedge[edge style={dashed}]{4}{2}{}
%\depedge[edge style={dashed}]{4}{3}{}
%\depedge[edge style={dashed}]{6}{5}{}
%\depedge[edge style={dashed}]{4}{6}{}
%\label{2}
%\end{dependency}}\\
%\begin{dependency}[hide label, arc edge]
%\begin{deptext}
%Some \& mistakes \& are \& worse \& than \& others. \\
%\end{deptext}
%\depedge{2}{1}{}
%\depedge[edge style={dashed}]{4}{2}{}
%\depedge{3}{4}{}
%\depedge[edge style={dashed}]{6}{5}{}
%\depedge{5}{6}{}
%\label{3}
%\end{dependency}
%\caption{Example parse trees: gold tree ($y$, top),
%predicted tree ($y^{*}$, middle) with arcs differing from the gold's 
%marked with a dashed line, and $m^J(y^{*},y)$ for $J = [2,5]$ (bottom tree).}
%%(upper: generated by the spaCy parser (https://spacy.io/demos/displacy); 
%%lower generated by the Stanford Online Parser (http://nlp.stanford.edu:8080/parser/)).}
%\label{fig:dependencyExample}
%\end{figure}

With these definitions we can present the SWVP algorithm (Algorithm \ref{alg:Wstructured_perceptron}). 
SWVP is in fact a family of algorithms differing with respect to two decisions that can be made 
at each pass over each training example $(x,y)$: the choice of the set $JJ_{x}$ and the implementation of the 
$\textsc{SetGamma}$ function. 

SWVP is very similar to CSP except for in the update rule. Like in CSP, the algorithm 
iterates over the training data examples and for each example it first predicts a label according to the current 
parameter vector $\textbf{w}$ (inference is discussed in \secref{sec:convProp}, property 2).
%\footnote{Although we employ the argmax function for inference (line 5), 
%in principle any label $y^{*}$ from which at least one violating MA can be derived is suitable 
%(see Definition 3). Like in other training algorithms, our basic reasoning is that argmax is a 
%good choice to base the parameter update on.} 
The main difference from CSP is in the update rule (lines 6-12). 
Here, for each substructure in the substructure set, $J \in JJ_{x}$, the algorithm generates a mixed assignment 
$m^J$ (lines 7-9). Then, $\textbf{w}$ is updated with a weighted sum of the mixed 
assignments (line 11), unlike in CSP where the update is held w.r.t the predicted assignment only.
%Then, the parameter vector $\textbf{w}$ is updated with a weighted sum of the updates that
%the standard CSP would have performed for an individual $m^J$ (line 11). 

The $\gamma(m^J)$ weights assigned to each of the $\Delta\phi(x,y,m^J)$ updates are defined 
by a \textsc{SetGamma} function (line 10). 
Intuitively, a $\gamma(m^J)$ weight should be higher the more the mixed assignment is assumed to convey 
useful information that can guide the update of $\textbf{w}$ in the right direction.
In \secref{sec:theory} we detail the conditions on \textsc{SetGamma} under which SWVP converges, and 
in \secref{sec:swvp-variants} we describe various \textsc{SetGamma} implementations.

Going back to the example of Figure~\ref{fig:dependencyExample}, 
%and suppose that the top and the bottom trees are gold and predicted trees respectively, 
one would assume (Sec. 1) that the head word prediction for  
{\it worse} is pivotal to the substantial difference between the two top trees (UAS of 0.2). 
CSP does not directly exploit this observation as it only updates its parameter 
vector with respect to the differences between complete assignments: $\textbf{w} = \textbf{w}+ \Delta\phi(x,y,z)$. 

In contrast, SWVP can exploit this observation in various ways. For example, it can generate a mixed 
assignment for each of the erroneous arcs where all other words are assigned their correct arc (according to the 
gold tree) except for that specific arc which is kept as in the bottom tree. Then, higher weights 
can be assigned to errors that seem more central than others. 
%SWVP can exploit this observation in various other ways.  
%Particularly, the set $JJ_{x^i}$ can include the complete predicted assignment (if $[L_x] \in JJ_{x^i}$)
%with or without additional mixed assignments. 
We elaborate on this in the next two sections.

%In the next section we prove the convergence of SWVP in the linearly separable case and 
%provide a mistake bound for non linearly separable data sets and a generalization bound.  

\begin{algorithm}[h]
	\caption{\footnotesize The Structured Weighted Violations Perceptron}
	\label{alg:Wstructured_perceptron}
\scriptsize
	\begin{algorithmic}[1]
		\STATEx {\bfseries Input:} data $D = \{x^i,y^i\}_{i=1}^{n}$, feature mapping $\phi$
%                 \STATE {\bfseries Input:} $JJ = \{JJ_{x^i} \subseteq 2^{[L_{x^i}]}  | (x^i,y^i) \in D\}$
		\STATEx {\bfseries Output:} parameter vector $\textbf{w}\in \mathbb{R}^d$
		\STATEx {\bfseries Define:} $\Delta\phi(x,y,z) \triangleq \phi(x,y) - \phi(x,z)$
		\STATE Initialize $\textbf{w}=0$.
		\REPEAT
		\FOR{each $(x^i,y^i)\in D$}
		\STATE $y^{*}=\argmax\limits_{y'\in \mathcal{Y}(x^i)}\textbf{w}\cdot\phi(x^i,y')$
		\IF{$y^{*} \neq y^i$} 
		\STATE {\bfseries Define:} $JJ_{x^i} \subseteq 2^{[L_{x^i}]}$
		\FOR{$J \in JJ_{x^i}$}
		\STATE {\bfseries Define:} $m^J$ s.t. $m_k^J=\begin{cases}
		y^{*}_k & k\in J \\
		y^i_k & else
		\end{cases}$
		\ENDFOR
		\STATE 	$\gamma = \textsc{SetGamma}()$
		\STATE $\textbf{w} = \textbf{w}+\sum\limits_{J \in I^v \cup I^{nv}}\gamma(m^J)\Delta\phi(x^i,y^i,m^J)$
%+\sum\limits_{J \in I^{nv} }\gamma(m^J)\Delta\phi(x^i,y^i,m^J)$
		\ENDIF
		\ENDFOR
		\UNTIL{Convergence}
	\end{algorithmic}
\end{algorithm}

%--------------------------------------------------------------------------%
\isection{Theory}{sec:theory}
%--------------------------------------------------------------------------%

%In this section we prove the convergence of the SWVP algorithm and describe its properties.
We start this section with the convergence conditions on the $\gamma$ vector which weighs  
the mixed assignment updates in the SWVP update rule (line 11). 
Then, using these conditions, we describe the relation between the SWVP and the CSP algorithms. 
After that, we prove the convergence of SWVP and analyse the derived properties of the algorithm.

\paragraph{$\gamma$ Selection Conditions}

Our main observation in this section is that SWVP converges under two conditions: 
\textbf{(a)} the training set $D$ is linearly separable; and \textbf{(b)} for any parameter vector $\textbf{w}$ achievable by the algorithm, 
there exists $(x,y) \in D$ with $JJ_x \subseteq 2^{[L_x]}$, such that for the predicted output 
$y^{*}\neq y$, \textsc{SetGamma} returns a $\gamma$ weight vector that respects the $\gamma$ 
selection conditions defined as follows:
\begin{definition}
\label{def:gamma}
	The \textbf{$\gamma$ selection conditions} for the SWVP algorithm are ($I=I^v\cup I^{nv}$):
\begin{footnotesize}
	\begin{align*}
	(1) & \sum_{J \in I} \gamma(m^J) = 1. \quad \gamma(m^J) \ge 0, \quad \forall J \in I.\\
	(2) & \quad \textbf{w}\cdot\sum\limits_{J\in I}\gamma(m^J)\Delta\phi(x^i,y^i,m^J) \leq 0.
	\end{align*}
\end{footnotesize}
\end{definition}

%Both conditions are necessary to assure convergence of the algorithm. Condition (1) bounds the weights and 
%condition (2) is an equivalent condition to the violation condition only in the weighted updates scenario. 
\com{Note that condition (2) is equivalent to:
%{\begin{footnotesize}
\begin{multline}
\footnotesize
|\textbf{w}\cdot\sum\limits_{J \in I^{nv}}\gamma(m^J)\Delta\phi(x,y,m^J)| \nonumber
\le \\ |\textbf{w}\cdot\sum\limits_{J \in I^{v}}\gamma(m^J)\Delta\phi(x,y,m^J)|
\end{multline}
%\end{footnotesize}}
since $\textbf{w} \cdot \sum_{J \in I^{nv}} \gamma(m^J) \Delta \phi(x,y,m^J) \ge 0$ and \\
$\textbf{w} \cdot \sum_{J \in I^{v}} \gamma(m^J) \Delta \phi(x,y,m^J) \le 0$.
}
With this definition we are ready to prove the following property.

\paragraph{SWVP Generalizes the CSP Algorithm}
 
We now show that the CSP algorithm is a special case of SWVP. 
CSP can be derived from SWVP when taking: 
$JJ_{x} = \{[L_x]\}$, and $\gamma (m^{[L_{x}]}) = 1$ for every $(x,y) \in D$.
With these parameters, the $\gamma$ selection conditions hold for every $\textbf{w}$ and $y^{*}$. 
Condition (1) holds trivially as there is only one $\gamma$ coefficient and it is equal to 1. 
Condition (2) holds as $y^{*} = m^{[L_x]}$ and hence $I = \{[L_x]\}$ and  
$w\cdot \sum\limits_{J\in I} \Delta\phi(x,y,m^J) \le 0$.
%$|\textbf{w} \cdot \Delta \phi(x,y,m^J)| \ge 0$.

%The violation fixing perceptron variants of \cite{Huang:12} differ from CSP only in the inference function, 
%replacing the $argmax$ with a \textit{findViolation} function that does not perform full inference. Hence, 
%these algorithms 
%are also special cases of SWVP with a single substructure $JJ_x \in \{[L_x]\}$ that is a violation 
%and $\gamma (m^J) = 1$ for all $(x,y)\in D$.

\isubsection {Convergence for Linearly Separable Data}{sec:convergence}
Here we give the theorem regarding the convergence of the SWVP in the separable case. We first define:
\begin{definition}
\label{def:separable}
A data set $D=\{x^i,y^i\}_{i=1}^n$ is \textbf{linearly separable with margin $\delta>0$} if there exists some 
vector $\textbf{u}$ with $\|\textbf{u}\|_2=1$ such 
that for all $i$:
\begin{footnotesize}
\[\textbf{u}\cdot\Delta\phi(x^i,y^i,z) \ge \delta,\forall z \in \mathcal{Y}(x^i).\]
\end{footnotesize}
\end{definition}
\begin{definition}
The \textbf{radius} of a data set $D=\{x^i,y^i\}_{i=1}^n$ is the minimal scalar $R$ s.t for all $i$: 
\begin{footnotesize}
\[\|\Delta\phi(x^i,y^i,z)\|\le R, \forall z \in \mathcal{Y}(x^i).\]
\end{footnotesize}
\end{definition}

We next extend these definitions:
\begin{definition}
Given a data set $D=\{x^i,y^i\}_{i=1}^n$ and a set $JJ = \{JJ_{x^i} \subseteq 2^{[L_{x^i}]}  | (x^i,y^i) \in D\}$, 
D is \textbf{linearly separable w.r.t $JJ$, with margin $\delta^{JJ}>0$} if there exists a vector $\textbf{u}$ 
with $\|\textbf{u}\|_2=1$ such that: $\textbf{u}\cdot\Delta\phi(x^i,y^i,m^J(z,y^i)) \ge \delta^{JJ}$ for 
all $i,z \in \mathcal{Y}(x^i), J \in JJ_{x^i}$. %and for every mixed assignment $m^J(z,y^i)$. 
\end{definition}
%Likewise, we define: 
\begin{definition}
The \textbf{mixed assignment radius w.r.t $JJ$} of a data set $D=\{x^i,y^i\}_{i=1}^n$ is a 
constant $R^{JJ}$ s.t for all $i$ it holds that: 
\begin{footnotesize}
\[\|\Delta\phi(x^i,y^i,m^J(z,y^i))\|\le R^{JJ}, \forall z \in \mathcal{Y}(x^i), J \in JJ_{x^i}.\]
\end{footnotesize}
\end{definition}

With these definitions we can make the following observation (proof in \ref{appendix:Observation1}): \\
{\bf Observation 1.} For linearly separable data $D$ and a set $JJ$, every unit vector $\textbf{u}$ that separates 
the data with margin $\delta$, also separates the data with respect to mixed assignments with $JJ$, 
with margin $\delta^{JJ} \ge \delta$. Likewise, it holds that $R^{JJ} \leq R$.

We can now state our convergence theorem. While the proof of this theorem resembles that of the CSP \cite{CollinsPerceptron}, unlike the CSP proof 
the SWVP proof relies on the {\it $\gamma$ selection conditions} presented above and on the {\it Jensen inequality}.

\begin{theorem}
\label{theorem:separable}
For any dataset D, linearly separable with respect to $JJ$ with margin $\delta^{JJ}>0$, 
the SWVP algorithm terminates after $t\le \frac{(R^{JJ})^2}{(\delta^{JJ})^2}$ steps, where $R^{JJ}$ is the mixed assignment 
radius of D w.r.t. $JJ$.
\end{theorem}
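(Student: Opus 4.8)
The plan is to run the classical Novikoff/Collins perceptron mistake-bound argument, with the $\gamma$ selection conditions (Definition~\ref{def:gamma}) playing the role that the two properties of $y^{*}=\argmax$ play in the CSP proof, and with the Jensen inequality controlling the norm of the aggregated update. Write $\textbf{w}^{(t)}$ for the weight vector after the $t$-th update (so $\textbf{w}^{(0)}=\textbf{0}$), fix a unit vector $\textbf{u}$ witnessing linear separability of $D$ w.r.t.\ $JJ$ with margin $\delta^{JJ}$, and consider the update triggered by some $(x^i,y^i)$ with prediction $y^{*}\neq y^i$, namely $\textbf{w}^{(t+1)}=\textbf{w}^{(t)}+\sum_{J\in I}\gamma(m^J)\,\Delta\phi(x^i,y^i,m^J)$, where, by convergence condition (b), the $\gamma$ returned here respects both parts of Definition~\ref{def:gamma}. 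First I would lower-bound the projection onto $\textbf{u}$: dotting the update with $\textbf{u}$ and using $\textbf{u}\cdot\Delta\phi(x^i,y^i,m^J(y^{*},y^i))\ge\delta^{JJ}$ for every $J\in JJ_{x^i}$ (hence for every $J\in I$), together with $\gamma(m^J)\ge0$ and $\sum_{J\in I}\gamma(m^J)=1$ from condition (1), gives $\textbf{u}\cdot\textbf{w}^{(t+1)}\ge\textbf{u}\cdot\textbf{w}^{(t)}+\delta^{JJ}$, so by induction $\textbf{u}\cdot\textbf{w}^{(t)}\ge t\,\delta^{JJ}$.

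Next I would upper-bound the squared norm. Expanding $\|\textbf{w}^{(t+1)}\|^2$ produces the cross term $2\,\textbf{w}^{(t)}\cdot\sum_{J\in I}\gamma(m^J)\Delta\phi(x^i,y^i,m^J)$, which is nonpositive by condition (2), plus the term $\big\|\sum_{J\in I}\gamma(m^J)\Delta\phi(x^i,y^i,m^J)\big\|^2$. This last term is the one genuinely new ingredient relative to the CSP proof: since $\{\gamma(m^J)\}_{J\in I}$ is a probability vector and $v\mapsto\|v\|^2$ is convex, the Jensen inequality yields $\big\|\sum_{J\in I}\gamma(m^J)\Delta\phi(x^i,y^i,m^J)\big\|^2\le\sum_{J\in I}\gamma(m^J)\|\Delta\phi(x^i,y^i,m^J)\|^2\le (R^{JJ})^2$, where the last step uses the definition of the mixed-assignment radius and again $\sum_{J\in I}\gamma(m^J)=1$. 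Hence $\|\textbf{w}^{(t+1)}\|^2\le\|\textbf{w}^{(t)}\|^2+(R^{JJ})^2$, and by induction $\|\textbf{w}^{(t)}\|^2\le t\,(R^{JJ})^2$.

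Combining the two bounds with Cauchy--Schwarz and $\|\textbf{u}\|=1$ gives $t\,\delta^{JJ}\le\textbf{u}\cdot\textbf{w}^{(t)}\le\|\textbf{w}^{(t)}\|\le\sqrt{t}\,R^{JJ}$, so $t\le (R^{JJ})^2/(\delta^{JJ})^2$; since each update increments the update count, the algorithm cannot make more updates than this and therefore terminates. I expect the main obstacle to be bookkeeping rather than a deep inequality: one must argue that \emph{every} update actually performed uses a $\gamma$ obeying both selection conditions (this is precisely what condition (b) is for, and it is what legitimizes both inductions), and one must be careful that the norm bound genuinely needs condition (1) --- without the constraint that $\gamma$ is a probability vector the Jensen step fails and an $|I|$-type blow-up reappears. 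Everything else is the routine telescoping of Collins' original analysis, with Observation~1 available if one prefers to restate the bound in terms of the ordinary margin $\delta$ and radius $R$.
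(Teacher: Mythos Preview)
Your proposal is correct and follows essentially the same argument as the paper: both proofs run the Novikoff/Collins two-sided bound, using $\gamma$ selection condition (1) plus the margin for the lower bound on $\textbf{u}\cdot\textbf{w}$, condition (2) to discard the cross term, and Jensen's inequality on the convex squared norm (with $\gamma$ a probability vector) to obtain the $(R^{JJ})^2$ upper bound. The only cosmetic difference is that you invoke Cauchy--Schwarz explicitly where the paper simply writes ``since $\|\textbf{u}\|=1$''.
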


\begin{proof}
Let $\textbf{w}^t$ be the weight vector before the $t^{th}$ update, 
thus $\textbf{w}^1=0$. Suppose the $t^{th}$ update occurs on 
example $(x,y)$, i.e. for the predicted output $y^{*}$ it holds that $y^{*} \neq y$.
We will bound $\|\textbf{w}^{t+1}\|^2$ from both sides.\\
First, it follows from the update rule of the algorithm that:
%\begin{align*}
%\scriptsize
$\textbf{w}^{t+1} = \enskip \textbf{w}^t+\sum\limits_{J\in I^v \cup I^{nv}}\gamma(m^J)\Delta\phi(x,y,m^J).$
%\end{align*}
For simplicity, in this proof we will use the notation $I^v \cup I^{nv} = I$. 
Hence, multiplying each side of the equation by $\textbf{u}$ yields:
\begin{small}
\begin{align*}
\textbf{u}\cdot\textbf{w}^{t+1} = &\enskip \textbf{u}\cdot\textbf{w}^t+\textbf{u}\cdot\sum\limits_{J\in I}\gamma(m^J)\Delta\phi(x,y,m^J)\\
=&\enskip \textbf{u}\cdot\textbf{w}^t+\sum\limits_{J\in I}\gamma(m^J)\textbf{u}\cdot\Delta\phi(x,y,m^J)\\
\ge &\enskip \textbf{u}\cdot\textbf{w}^t+\sum\limits_{J\in I}\gamma(m^J)\delta^{JJ} \enskip \text{(margin property)}\\
\ge&\enskip \textbf{u}\cdot\textbf{w}^t+\delta^{JJ} 
\ge \ldots\ge t\delta^{JJ}.
\end{align*}
\end{small}
The last inequality holds because $\sum_{J\in I}\gamma(m^J)=1$.
From this we get that $\|w^{t+1}\|^2 \ge (\delta^{JJ})^2 t^2$ since $\|u\|$=1.
Second,
\begin{small}
\begin{align*}
\|\textbf{w}^{t+1}\|^2 = &\enskip \|\textbf{w}^t+\sum\limits_{J\in I}\gamma(m^J)\Delta\phi(x,y,m^J)\|^2\\  
=&\enskip \|\textbf{w}^t\|^2 + \|\sum\limits_{J\in I}\gamma(m^J)\Delta\Phi(x,y,m^J) \|^2\\ 
&+2\textbf{w}^t\cdot\sum\limits_{J\in I}\gamma(m^J)\Delta\Phi(x,y,m^J).
\end{align*}
\end{small}
From $\gamma$ selection condition (2) we get that:
\begin{small}
\begin{align*}
\|\textbf{w}^{t+1}\|^2 \le&\enskip \|\textbf{w}^t\|^2 + \|\sum\limits_{J \in I}\gamma(m^J)\Delta\Phi(x,y,m^J) \|^2\\
\le &\enskip \|\textbf{w}^t\|^2 + \sum\limits_{J \in I}\gamma(m^J)\|\Delta\Phi(x,y,m^J) \|^2\\
\le&\enskip  \|\textbf{w}^t\|^2 + (R^{JJ})^2. \enskip \text{(radius property)}
\end{align*}
\end{small}
The inequality one before the last results from the {\it Jensen inequality} 
which holds due to {\bf (a)} $\gamma$ selection condition (1); and {\bf (b)} the squared norm function being convex.
From this we finally get:
\begin{footnotesize}
\begin{align*}
\|\textbf{w}^{t+1}\|^2 \le& \enskip\|\textbf{w}^t\|^2 + (R^{JJ})^2 \le \ldots\le t(R^{JJ})^2.
\end{align*}
\end{footnotesize}
Combining the two steps we get: 
\begin{footnotesize}
$$(\delta^{JJ})^2t^2\le\| \textbf{w}^{t+1}\|^2 \le t(R^{JJ})^2.$$
\end{footnotesize}
From this it is easy to derive the upper bound in the theorem:
\begin{footnotesize}
$t \le \frac{(R^{JJ})^2}{(\delta^{JJ})^2}$ .
\end{footnotesize}
\end{proof}

\isubsection{Convergence Properties}{sec:convProp}

We next point on three properties of the SWVP algorithm, derived from its convergence proof:

{\bf Property 1 (tighter iterations bound)} The convergence proof of CSP \cite{CollinsPerceptron}
is given for a vector $\textbf{u}$ that linearly separates the data, with margin $\delta$ and for a data radius $R$. 
Following observation 1, it holds that in our case, $\textbf{u}$ also linearly separates the data with respect to 
mixed assignments with a set $JJ$ and with margin $\delta^{JJ} \ge \delta$. 
Together with the definition of $R^{JJ} \le R$ we get that: $\frac{({R^{JJ}})^2}{{(\delta^{JJ}})^2} \leq \frac{R^2}{\delta^2}$.
This means that the bound on the number of updates made by SWVP is tighter than the bound of CSP. 

\com{
{\bf Property 2 (convergence under conditions)} 

%As stated above, the convergence proof holds under two conditions: 
%(a) the training data $D$ is linearly separable; and (b) for any weight vector $\textbf{w}$ achievable by the algorithm, 
%there exists $(x,y) \in D$ with $JJ_x \subseteq 2^{[L_x]}$, such that for a predicted output $z \neq y$, \textsc{SetGamma} 
%returns a $\gamma$ parameter vector that respects the $\gamma$ selection conditions. 

The $\gamma$ selection conditions paragraph states two conditions ((a) and (b)) under which the 
convergence proof holds. If condition (b) does not hold then SWVP gets stuck when it reaches 
the vector $\textbf{w}$ for which the condition does not hold, as it cannot perform any updates. 
In such a case the algorithm will not converge to a weight vector $\textbf{w}$ that separates the data.
Dynamic updates (property 4 below), may be the remedy for this.
}

{\bf Property 2 (inference)} 
%Property 2 characterizes the inference problem in SWVP: finding an assignment that is 
%both desirable given the current model, and respects the $\gamma$ selection conditions. 
From the $\gamma$ selection conditions it holds that any label from which at least one 
violating MA can be derived through $JJ_x$ is suitable for an update. This is because in such a case 
we can choose, for example, a \textsc{SetGamma} function that assigns the weight of 1 to that MA, 
and the weight of 0 to all other MAs.

Algorithm \ref{alg:Wstructured_perceptron} employs the $argmax$ inference function, following 
the basic reasoning that it is a good choice to base the parameter update on. 
Importantly, if the inference function is argmax and the algorithm performs an update ($y^{*} \neq y$), 
this means that $y^{*}$, the output of the argmax function, is a violating MA by definition. 
%(as long as $\{[L_x]\} \in JJ_{x}$).
However, it is obvious that solving the inference problem and the optimal $\gamma$ assignment problems jointly 
may result in more informed parameter ($\textbf{w}$) updates.
We leave a deeper investigation of this issue to future research.
%which can then be complemented by various \textsc{SetGamma} functions (e.g. one that gives the weight of 1 
%to the mixed assignment with the highest violation and 0 to the rest). 

{\bf Property 3 (dynamic updates)} The $\gamma$ selection conditions paragraph states two conditions ((a) and (b)) 
under which the convergence proof holds. While it is trivial for \textsc{SetGamma} to generate a $\gamma$ vector 
that respects condition (a), if there is a parameter vector \textbf{w'} achievable by the algorithm 
for which \textsc{SetGamma} cannot generate $\gamma$ that respects condition (b),
SWVP gets stuck when reaching \textbf{w'}. 

This problem can be solved with {\it dynamic updates}. A deep look into the convergence proof reveals that 
the set $JJ_x$ and the \textsc{SetGamma} function can actually differ between iterations. 
While this will change the bound 
on the number of iterations, it will not change the fact that the algorithm converges if the data is linearly 
separable. This makes SWVP highly flexible as it can always back off to the CSP setup of $JJ_x = \{[L_x]\}$, 
and $\forall (x,y) \in D: \gamma (m^{[L_x]}) = 1$, update its parameters and continue with its original $JJ$ 
and \textsc{SetGamma} when this option becomes feasible. If this does not happen, the algorithm can continue 
till convergence with the CSP setup.

\subsection {Mistake and Generalization Bounds}
The following bounds are proved:
the number of updates in the separable case (see Theorem~\ref{theorem:separable}); the number of mistakes in the non-separable case (see Appendix~\ref{appendix:mistake}); and the probability to misclassify an unseen example (see Appendix~\ref{appendix:mistake}).
%The supplementary material states and proves generalization and mistake bounds for SWVP
%on: (a) the number of updates in the separable case; 
%(b) the number of mistakes in the non-separable case; and (c) the probability to misclassify 
%an unseen example (generalization). 
It can be shown that in the general case these bounds are tighter than those of the CSP special case. We next discuss variants of SWVP.

%--------------------------------------------------------------------------%
\isection{Passive Aggressive SWVP}{sec:swvp-variants}
%--------------------------------------------------------------------------%
Here we present types of update rules that can be implemented within SWVP. 
Such rule types are defined by: (a) the selection of $\gamma$, which should respect 
the \textit{$\gamma$ selection conditions} (see Definition~\ref{def:gamma}) and 
(b) the selection of  $JJ = \{JJ_{x} \subseteq 2^{[L_{x}]}  | (x,y) \in D\}$, 
the substructure sets for the training examples.

%\isubsection{$\gamma$ Selection}{sec:gamma-selection}
\paragraph{$\gamma$ Selection}
A first approach we consider is the \textit {aggressive approach}\footnote{We borrow the term \textit{passive-aggressive} 
from \cite{Crammer:06}, despite the substantial difference between the works.}
where only mixed assignments that are violations $\{m^J:J \in I^{v}\}$ are exploited (i.e. for all $J \in I^{nv}, \gamma (m^J) = 0$). 
Note, that in this case condition (2) of the \textit{$\gamma$ selection conditions} trivially holds as: 
$\textbf{w} \cdot \sum\limits_{J \in I^{v}} \gamma(m^J) \Delta \phi(x,y,m^J) \leq 0$. 
The only remaining requirement is that condition (1) also holds, i.e. 
that $\sum_{J \in I^v} \gamma (m^J) = 1.$
% In this approach, for a training example $(x,y)$ and a predicted label $y^{*} \neq y$ the update rule is: $\textbf{w} = \textbf{w} + \sum\limits_{J \in I^v}\gamma(m^J(y^{*},y))\Delta\phi(x,y,m^J(y^{*},y))$
% for $\sum\limits_{J \in I^v} \gamma (m^J) = 1$. 

The opposite, \textit{passive approach}, exploits only non-violating MA's $\{m^J:J \in I^{nv}\}$.
However, such $\gamma$ assignments do not respect $\gamma$ selection condition (2), as they yield:
$\textbf{w} \cdot \sum_{J \in I^{nv}} \gamma(m^J) \Delta \phi(x,y,m^J) \le 0$ which holds if and only if for every $J \in I^{nv}$, $\gamma (m^J) = 0$ that in turn contradicts condition (1).

Finally, we can take a \textit{balanced approach} which gives a positive $\gamma$ coefficient for 
at least one violating MA and at least one positive $\gamma$ coefficient for a non-violating MA. 
This approach is allowed by SWVP as long as both $\gamma$ selection conditions hold.

We implemented two weighting methods, both based on the concept of margin:\\
\textbf{(1) Weighted Margin (WM)}:
$\gamma(m^J) = \frac{|\textbf{w}\cdot\Delta\phi(x,y,m^J)|^{\beta}}{\sum\limits_{J' \in JJ_x}|\textbf{w}\cdot\Delta\phi(x,y,m^{J'})|^{\beta}}$ \\
\textbf{(2) Weighted Margin Rank (WMR)}: $\gamma(m^J) = \left(\frac{|JJ_x|-r}{|JJ_x|}\right)^{\beta}$.
where $r$ is the rank of $|\textbf{w}\cdot\Delta\phi(x,y,m^J(y^{*},y))|$ among 
the $|\textbf{w}\cdot\Delta\phi(x,y,m^{J'}(y^{*},y))|$ values for $J' \in JJ_x$.
%defined as follows:\\
%\textbf{(1)} Weighted Margin (WM):
%\begin{footnotesize}
%\begin{equation}
%\gamma(m^J) = \frac{|\textbf{w}\cdot\Delta\phi(x,y,m^J)|^{\beta}}{\sum\limits_{J' \in JJ_x}|\textbf{w}\cdot\Delta\phi(x,y,m^{J'})|^{\beta}} \nonumber
%\end{equation}
%\end{footnotesize}
%\textbf{(2)} Weighted Margin Rank (WMR):
%\begin{scriptsize}
%\begin{equation}
%\gamma(m^J) = \left(\frac{|JJ_x|-r}{|JJ_x|}\right)^{\beta} \nonumber
%\end{equation}
%\end{scriptsize}

%\begin{enumerate}
%\item[\textbf{(1)}] \textbf{ Weighted Margin (WM)}:
%\[\gamma(m^J) = \frac{|\textbf{w}\cdot\Delta\phi(x,y,m^J)|^{\beta}}{\sum\limits_{J' \in JJ_x}|\textbf{w}\cdot\Delta\phi(x,y,m^{J'})|^{\beta}}\]
%\item[\textbf{(2)}] \textbf{ Weighted Margin Rank (WMR)}:
%\[\gamma(m^J) = \left(\frac{|JJ_x|-r}{|JJ_x|}\right)^{\beta}\] 
%where $r$ is the rank of $\textbf{w}\cdot\Delta\phi(x,y,m^J(y^{*},y))$ among 
%the $\textbf{w}\cdot\Delta\phi(x,y,m^{J'}(y^{*},y))$ values for $J' \in JJ_x$.
%\end{enumerate}

Both schemes were implemented twice, within a balanced approach (denoted as B) and an aggressive approach (denoted as A).\footnote{
For the aggressive approach the equations for schemes (1) and (2) are changed such that $JJ_x$ is replaced with $I(y^*,y,JJ_x)^v$.}
The aggressive schemes respect both $\gamma$ selection conditions. 
The balanced schemes, however, respect the first condition but not necessarily the second. 
Since all models that employ the balanced weighting schemes converged after at most 10 iterations, 
we did not impose this condition (which we could do by, e.g., excluding terms for $J \in I^{nv}$ till condition (2) holds).

%In \secref{sec:experiments} we describe our implementation of the aggressive 
%and the balanced $\gamma$ selection approaches in this paper.

%\isubsection{$JJ$ Selection}{sec:JJ-selection}
\paragraph{$JJ$ Selection}
Another choice that strongly affects the updates made by SWVP is that of $JJ$. 
A choice of $JJ_{x} = 2^{[L_{x}]},$  for every $(x,y) \in D$ results in an update 
rule which considers all possible mixing assignments derived from the predicted label $y^{*}$ and the gold label $y$. 
Such an update rule, however, requires computing a sum over an exponential number of terms ($2^{L_{x}}$) and is therefore highly inefficient. 

Among the wide range of alternative approaches, in this paper we exploit \textit{single difference} mixed assignments. 
In this approach we define: $JJ = \{JJ_{x} = \{\{1\}, \{2\}, \ldots \{L_{x}\}\} | (x,y) \in D\}$. 
%That is, the derived mixed assignment will be defined as follows. 
For a training pair $(x,y) \in D$, a predicted label $y^{*}$ and $J = \{j\} \in JJ_{x}$, we will have: \\ 
$m_k^J(y^{*},y)=\begin{cases}
y_k & k\neq j \\
y^{*}_k & k=j
\end{cases}$

Under this approach for the pair $(x,y) \in D$ only $L_{x}$ terms are summed in the SWVP update rule. 
%We further experiment with $JJ_x$ of size 3 and 5, meaning we consider all threesomes and quintets of index combinations, 
%the results can be seen in \secref{sec:results}. 
We leave a further investigation of $JJ$ selection approaches to future research.

%--------------------------------------------------------------------------%
\isection{Experiments}{sec:experiments}
%--------------------------------------------------------------------------%

%We compare the SWVP to the CSP algorithm in both synthetic and real data experiments.

%\isubsection{Synthetic Experiments}{sec:synthetic}
%\paragraph{Data}

\paragraph{Synthetic Data}

We experiment with synthetic data generated by a linear-chain, first-order Hidden Markov Model (HMM, \cite{Rabiner:86}).
Our learning algorithm is a liner-chain conditional random field (CRF, \cite{Lafferty:01}): 
$P(y|x) = \frac{1}{Z(x)} \prod_{i=1:L_x} exp(w \cdot \phi (y_{i-1},y_i,x))$ (where $Z(x)$ is 
a normalization factor) with binary indicator features 
$\{x_i, y_i, y_{i-1}, (x_i,y_i), (y_i,y_{i-1}), (x_i, y_i, y_{i-1})\}$ for the triplet $(y_i, y_{i-1},x)$.

A dataset is generated by iteratively sampling $K$ items, each is sampled as follows.
We first sample a hidden state, $y_1$, from a uniform prior distribution. Then, iteratively,  
for $i = 1,2, \ldots, L_x$ we sample an observed state from the emission probability and (for $i < L_x$) 
a hidden state from the transition probability.
We experimented in 3 setups. In each setup we generated 10 datasets that were subsequently divided 
to a 7000 items training set, a 2000 items development set and a 1000 items test set. In all datasets, for each 
item, we set $L_x = 8$. We experiment in three conditions: 
(1) simple(++), learnable(+++), (2) simple(++), learnable(++) and (3) simple(+), learnable(+).\footnote{
Denoting $D_x = [C_x] $, $D_y = [C_y] $,  
and a permutation of a vector $v$ with $perm(v)$, 
%and the entire probability with $P(y' \in {D_y} | y)$ and $P(x \in {D_x} | y)$ , 
the parameters of the different setups are:
(1) simple(++), learnable(+++): $C_x = 5$, $C_y = 3$, $P(y'|y) = perm(0.7,0.2,0.1)$, $P(x|y) = perm(0.75, 0.1, 0.05, 0.05, 0.05)$.
(2) simple(++), learnable(++): $C_x = 5$, $C_y = 3$, $P(y'|y) = perm(0.5,0.3,0.2)$, $P(x|y) = perm(0.6, 0.15, 0.1, 0.1, 0.05)$.
(3) simple(+), learnable(+): $C_x = 20$ , $C_y = 7$ , $P(y'|y) = perm(0.7,0.2,0.1,0, \dots,0))$, $P(x|y) = perm(0.4, 0.2, 0.1, 0.1, 0.1,0, \ldots, 0)$.	 
}

%\begin{small}
%\begin{enumerate}
%	\item[(1)] \textbf{simple(++), learnable(+++)}: $C_x = 5$, $C_y = 3$, $P(y'|y) = perm(0.7,0.2,0.1)$, $P(x|y) = perm(0.75, 0.1, 0.05, 0.05, 0.05)$.
%	\item[(2)] \textbf{simple(++), learnable(++)}: $C_x = 5$, $C_y = 3$, $P(y'|y) = perm(0.5,0.3,0.2)$, $P(x|y) = perm(0.6, 0.15, 0.1, 0.1, 0.05)$.
%	\item[(3)] \textbf{simple(+), learnable(+)}: $C_x = 20$ , $C_y = 7$ , $P(y'|y) = perm(0.7,0.2,0.1,0, \dots,0))$, $P(x|y) = perm(0.4, 0.2, 0.1, 0.1, 0.1,0, \ldots, 0)$.	 
%\end{enumerate}
%\end{small}

%\paragraph{Evaluation}
%Recall that we are experimenting in three setups, each with 10 instance datasets. 
For each dataset (3 setups, 10 datasets per setup) we train variants of the SWVP algorithm differing in the $\gamma$ 
selection strategy (WM or WMR, \secref{sec:swvp-variants}), being aggressive (A) or passive (B), and in their $\beta$ 
parameter ($\beta = \{0.5, 1, \ldots, 5\}$). Training is done on the training subset and the best 
performing variant on the development subset is applied to the test subset.
For CSP no development set is employed as there is no hyper-parameter to tune. 
%for each dataset we only need to train the algorithm on the training set and then apply 
%it to the test set as there is no hyper-parameter to tune. 
We report averaged accuracy (fraction of observed states for which the model successfully predicts the hidden state value) 
across the test sets, together with the standard deviation. 

%\isubsection{Dependency Parsing}{sec:DEP} 
\paragraph{Dependency Parsing}

We also report initial dependency parsing results.
We implemented our algorithms within the TurboParser \cite{Martins:13}. That is, every other aspect 
of the parser: feature set, probabilistic pruning algorithm, inference algorithm etc., is kept fixed but
training is performed with SWVP. We compare our results to the parser performance 
with CSP training (which comes with the standard implementation of the parser).

%we added our implementation to replace the parameter vector update that is implemented in the 
%function \textit{MakeGradientStep}.

We experiment with the datasets of the CoNLL 2007 shared task on multilingual dependency 
parsing \cite{Conll:07}, for a total of 9 languages. 
We followed the standard train/test split of these dataset. For SWVP, we randomly sampled 1000 sentences from 
each training set to serve as development sets and tuned the parameters as in the synthetic data experiments. 
CSP is trained on the training set and applied to the test set without any development set involved. 
%That is, the total training and development data used by both algorithm is identical.
We report the Unlabeled Attachment Score (UAS) for each language and model.

% \isubsection{Named Entity Recognition}{sec:NER}

%--------------------------------------------------------------------------%
\isection{Results}{sec:results}
%--------------------------------------------------------------------------%

\begin{table*}[t!]
\begin{center}
\begin{scriptsize}
%\begin{tabular}{|c|c|p{0.1\textwidth}|c|c|p{0.05\textwidth}|c|c|p{0.05\textwidth}|c|c|} 
\begin{tabular}{|c|c|c|c|c|c|c|c|c|c|c|} 
\hline
 \multirow{2}{*}{} & \multicolumn{3}{|c|}{simple(++), learnable(+++)}  & \multicolumn{3}{|c|}{simple(++), learnable(++)} & \multicolumn{3}{|c|}{simple(+), learnable(+)}  \\ 
\cline{2-10}
% & average & std & \# won CSP & average & std & \# won CSP & average & std & \# won CSP & average & std & \# won CSP  \\ 
Model& Acc. (std) & \# Wins & Gener. & Acc. (std) & \# Wins & Gener. & Acc. (std) & \# Wins &  Gener. \\ 
\hline

B-WM & 75.47(3.05) & {\bf 9/10} & 10/10 &  {\bf 63.18 (1.32)} & {\bf 9/10} & 10/10 & 28.48 (1.9)&  5/10 & 10/10 \\ 
\hline
B-WMR & {\bf 75.96 (2.42)} & 8/10 & 10/10 & 63.02 (2.49) & 9/10 & 10/10 & 24.31 (5.2)& 4/10 &  10/10\\ 
\hline
\hline
A-WM & 74.18 (2.16) & 7/10 & 10/10 & 61.65 (2.30) & {\bf 9/10} & 10/10 & {\bf 30.45 (1.0)}& {\bf 6/10} & 10/10  \\	
\hline
A-WMR & 75.17 (3.07) & 7/10 & 10/10 & 61.02 (1.93) & 8/10 & 10/10 &  25.8 (3.18)& 2/10 &  10/10 \\ 
\hline 
\hline
CSP & 72.24 (3.45) & NA & NA & 57.89 (2.85) &  NA & NA & 25.27(8.55) & NA &  NA \\ 
\hline 
\end{tabular}
\caption{Overall Synthetic Data Results. 
\textit{A-} and \textit{B-} denote an aggressive and a balanced approaches, respectively.
Acc. (std) is the average and the standard deviation of the accuracy across 10 test sets. 
\# Wins is the number of test sets on which the SWVP algorithm outperforms CSP. 
Gener. is the number of times the best $\beta$ hyper-parameter value on the development set 
is also the best value on the test set, or the test set accuracy with the best development 
set $\beta$ is at most 0.5\% lower than that with the best test set $\beta$.}
\label{table:SynthaticRes}
\end{scriptsize}
\end{center}
\vspace{-0.2cm}
\end{table*}

\begin{table*}[t!]
\begin{center}
\begin{scriptsize}
\begin{tabular}{|c|c||c|c|c||c||c|c|c|} 
\hline 
\multirow{1}{*}{} & \multicolumn{4}{|c|}{First Order}  & \multicolumn{4}{|c|}{Second Order} \\ 
\cline{2-9}
Language & CSP & B-WM & Top B-WM & Test B-WM & CSP & B-WM & Top B-WM & Test B-WM\\ 
\hline
English & 86.34 & 86.4 & \textbf{86.7}  & 86.7 & {\bf 88.02} & 87.82  & 87.82 & 87.92 \\ 
\hline
Chinese & 84.60 & 84.5 & \textbf{85.04}  & 85.05 & 86.82 & 86.69  & \textbf{86.83} & 87.02 \\ 
\hline 
Arabic & 79.09 & 79.17  & \textbf{79.21}  & 79.21 & 76.07 & 75.94  & \textbf{76.09} &  76.09\\ 
\hline 
Greek & \textbf{80.41} & 80.20  & 80.28  & 80.28 & 80.31 & \textbf{80.40}  & \textbf{80.40} & 80.61 \\ 
\hline 
Italian & 84.63 & 84.64  & \textbf{84.74} &  84.70 & 84.03 & 84.08  & \textbf{84.15} & 84.28 \\ 
\hline 
Turkish & \textbf{83.05} & 82.89  & 82.89 &  82.89 & 83.02 & \textbf{83.04}  & \textbf{83.04} &  83.31 \\ 
\hline 
Basque  & 79.47 & \textbf{79.54}  & \textbf{79.54} & 79.54 & 80.52 & 80.57  & \textbf{80.63} & 80.64\\ 
\hline 
Catalan & \textbf{88.51} & 88.46  & 88.50 & 88.5 & 88.71 & \textbf{88.81}  & \textbf{88.81} & 88.82\\ 
\hline 
Hungarian & \textbf{80.17} & 80.07  & 80.07 & 80.21 & \textbf{80.61} & 80.45  & 80.45 & 80.55\\ 
\hline 
\hline
Average & 83.69 & 83.65 & \textbf{83.77} & 83.79 & 83.12 & 83.08 & \textbf{83.13} & 83.35\\ 
\hline 
\end{tabular}
\caption{First and second order dependency parsing UAS results for CSP trained models, 
as well as for models trained with SWVP with a balanced $\gamma$ selection (B) and with a weighted margin (WM) strategy. 
For explanation of the B-WM, Top B-WM, and Test B-WM see text. For each 
language and parsing order we highlight the best result in bold font, but this do not include results from Test B-WM 
as it is provided only as an upper bound on the performance of SWVP. 
}
\label{table:DepRes12}
\end{scriptsize}
\end{center}
\vspace{-0.2cm}
\end{table*}

\paragraph{Synthetic Data}
Table \ref{table:SynthaticRes} presents our results. In all three setups an SWVP algorithm is superior. 
Averaged accuracy differences between the best performing algorithms and CSP 
are: 3.72 (B-WMR, (simple(++), learnable(+++))), 5.29 (B-WM, (simple(++), learnable(++))) 
and 5.18 (A-WM, (simple(+), learnable(+))).
In all setups SWVP outperforms CSP in terms of averaged performance 
(except from B-WMR for (simple(+), learnable(+))). Moreover, the weighted models are more stable than 
CSP, as indicated by the lower standard deviation of their accuracy scores. Finally, for the more 
simple and learnable datasets the SWVP models outperform CSP in the majority of cases (7-10/10).

We measure generalization from development to test data in two ways. 
First, for each SWVP algorithm we count the number of times its $\beta$ parameter 
results in an algorithm that outperforms the CSP on the development set but not on 
the test set (not shown in the table). Of the 120 comparisons reported in the table (4 SWVP models, 
3 setups, 10 comparisons per model/setup combination) this happened once (A-MV, (simple(++), learnable(+++)).

Second, we count the number of times the best development set value of the $\beta$ hyper-parameter is also the best value on the test set, or the test set accuracy with the best development set $\beta$ is at most 0.5\% lower than that with the best test set $\beta$.
The \textit{Generalization} column of the table shows that this has not happened in all of the 120 runs of SWVP.

\paragraph{Dependency Parsing} 

Results are given in Table 2.
For the SWVP trained models we report three numbers: (a) B-WM is the standard setup where the $\beta$ hyper parameter 
is tuned on the development data; 
(b) For Top B-WM we first selected the models with a UAS score within 0.1\% of the best development data result, 
and of these we report the UAS of the model that performs best on the test set;
and (c) Test B-WM reports results when $\beta$ is tuned on the test set. This measure provides an upper bound 
on SWVP with our simplistic $JJ$ (\secref{sec:swvp-variants}).
%($JJ = \{JJ_{x} = \{\{1\}, \{2\}, \ldots \{L_{x}\}\} | (x,y) \in D\}$).

Our results indicate the potential of SWVP. Despite our simple $JJ$ set, Top B-WM and Test B-WM 
improve over CSP in 5/9 and 6/9 cases in first order parsing, respectively, and in 
7/9 cases in second order parsing. In the latter case, Test B-WM improves the UAS over CSP in 0.22\%  
on average across languages. Unfortunately, SWVP still does not generalize well from train to test data  
as indicated, e.g., by the modest improvements B-WM achieves over CSP in only 5 of 9 languages in 
second order parsing. 

%--------------------------------------------------------------------------%
\isection{Conclusions}{sec:conclusions}
%--------------------------------------------------------------------------%

We presented the Structured Weighted Violations Perceptron (SWVP) algorithm, a generalization of the 
Structured Perceptron (CSP) algorithm that explicitly exploits the internal structure of 
the predicted label in its update rule. We proved the convergence of the algorithm for linearly separable 
training sets under certain conditions on its parameters, and provided generalization and mistake 
bounds.

In experiments we explored only very simple configurations of the SWVP parameters - $\gamma$ and $JJ$. 
Nevertheless, several of our SWVP variants outperformed the CSP special case in synthetic data experiments. 
In dependency parsing experiments, SWVP demonstrated some improvements over CSP, 
but these do not generalize well. While we find these results somewhat encouraging, 
they emphasize the need to explore the much more flexible $\gamma$ and $JJ$ selection strategies 
allowed by SWVP (Sec. 4.2). In future work we will hence develop $\gamma$ and $JJ$ selection algorithms, 
where selection is ideally performed jointly with inference (property 2, Sec. 4.2), 
to make SWVP practically useful in NLP applications.

%--------------------------------------------------------------------------%

% \section*{Appendices}
\appendix
%\begin{appendices}

\section{Proof Observation 1.}
\label{appendix:Observation1}
{\bf Observation 1.} For linearly separable data $D$ and a set $JJ$, every unit vector \textbf{u} that separates 
the data with margin $\delta$, also separates the data with respect to mixed assignments with $JJ$, 
with margin $\delta^{JJ} \ge \delta$. Likewise, it holds that $R^{JJ} \leq R$.

\begin{proof}
For every training example $(x,y) \in D$, it holds that: 
$\cup_{z \in \mathcal{Y}(x)} m^J(z,y) \subseteq \mathcal{Y}(x)$. 
As \textbf{u} separates the data with margin $\delta$, it holds that:
\begin{small}
\begin{align*}
\textbf{u}\cdot\Delta\phi(x,y,m^J(z,y)) \ge \delta^{JJ_{x}} ,& \quad \forall z \in \mathcal{Y}(x), J \in JJ_{x}.\\
\textbf{u}\cdot\Delta\phi(x,y,z) \ge \delta,& \quad \forall z \in \mathcal{Y}(x).
\end{align*}
\end{small}
Therefore also $\delta^{JJ_{x}} \ge \delta$. As the last inequality 
holds for every $(x,y) \in D$ we get that $\delta^{JJ} = \min_{(x,y) \in D} \delta^{JJ_{x}} \ge \delta$.\\
From the same considerations it holds that $R^{JJ} \leq R$. This is because 
$R^{JJ}$ is the radius of a subset of the dataset with radius $R$ (proper subset if $\exists (x,y) \in D, [L_{x}] \notin JJ_{x}$, 
non-proper subset otherwise).
\end{proof}

\isection{Mistake and Generalization Bounds - Non Separable Case}{sec:bounds}
\label{appendix:mistake}
\paragraph{Mistake Bound}
Here we provide a mistake bound for the algorithm in the non-separable case. We start with the following definition and observation: 
\begin{definition}
\label{def:general}
Given an example $(x^i,y^i)\in D$, for a $\textbf{u},\delta$ pair define:
\begin{align*}
r^i&=\textbf{u}\cdot\phi(x^i,y^i)-\max\limits_{z\in \mathcal{Y}(x^i)}\textbf{u}\cdot\phi(x^i,z)\\
\epsilon_i &= \max\{0,\delta-r^i\}\\
{r^i}^{JJ} &=\textbf{u}\cdot\phi(x^i,y^i)- \\
&\max\limits_{z\in \mathcal{Y}(x^i),J \in JJ_{x^i}}\textbf{u}\cdot\phi(x^i,m^J(z,y^i))
\end{align*} 
Finally define:
$D_{\textbf{u},\delta}=\sqrt{\sum\limits_{i=1}^{n}\epsilon_i^2}$ 
\end{definition}
{\bf Observation 2.} For all $i$: $r^i \leq {r^i}^{JJ}$.\\
Observation 2 easily follows from Definition~\ref{def:general}. Following this observation we denote:
$r^{diff} = \min_{i}  \{{r^i}^{JJ} - r^i\} \geq 0$ and present the next theorem:
\begin{theorem}
\label{mistake_bound}
For any training sequence $D$, for the \textbf{first} pass over the training set of 
the CSP and the SWVP algorithms respectively, it holds that:
\begin{align*}
\#mistakes-CSP &\le \min\limits_{\textbf{u}:\|\textbf{u}\|=1,\delta>0} \frac{(R+D_{\textbf{u},\delta})^2}{\delta^2}.\\
\#mistakes-SWVP&\le\min\limits_{\textbf{u}:\|\textbf{u}\|=1,\delta>0} \frac{(R^{JJ}+D_{\textbf{u},\delta})^2}{(\delta  + r^{diff})^2}.
\end{align*}     
\end{theorem}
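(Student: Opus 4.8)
The plan is to reduce the non-separable case to the separable one via Collins' feature-augmentation trick and then apply our own separable-case bound, Theorem~\ref{theorem:separable}. Note first that taking $JJ_{x^i}=\{[L_{x^i}]\}$ for all $i$ makes each $m^{[L_{x^i}]}(z,y^i)=z$, hence ${r^i}^{JJ}=r^i$, so $r^{diff}=0$ and $R^{JJ}=R$; thus the CSP line of the statement is exactly the $JJ_{x^i}=\{[L_{x^i}]\}$ instance of the SWVP line, and it is enough to prove the SWVP bound.

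Fix a unit vector $\textbf{u}$ and a margin $\delta>0$. I would pass to $\mathbb{R}^{d+n}$ by the standard augmentation: append $n$ new coordinates, where coordinate $d+i$ of the feature map is a free constant $\Delta>0$ on the pair $(x^i,y^i)$ and $0$ on every other input--label pair. Then for every relevant mixed assignment $m^J(z,y^i)\ne y^i$ the augmented difference $\widetilde{\Delta\phi}(x^i,y^i,m^J(z,y^i))$ agrees with $\Delta\phi(x^i,y^i,m^J(z,y^i))$ on the first $d$ coordinates, equals $\Delta$ on coordinate $d+i$, and is $0$ on the other new coordinates. Take $\widetilde{\textbf{u}}=(\textbf{u},\,\epsilon_1'/\Delta,\dots,\epsilon_n'/\Delta)$ with $\epsilon_i':=\max\{0,\ \delta+r^{diff}-{r^i}^{JJ}\}$. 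The crucial structural point is that on the \emph{first} pass the examples are visited in the fixed order $1,\dots,n$, so when example $i$ is processed coordinate $d+i$ of the current weight vector is still $0$; hence the $\argmax$ inference, the partition $I^{v}\cup I^{nv}$, and the weights $\gamma(m^J)$ are identical to those of SWVP on the original data, $\gamma$ selection condition~(2) continues to hold in $\mathbb{R}^{d+n}$ (the new coordinate contributes $0$ to the inner product appearing in it at that step), and, since $\sum_{J\in I}\gamma(m^J)=1$, each update adds exactly $\Delta$ to coordinate $d+i$. So the number of first-pass updates on the original data equals that of this coupled run on the augmented data. (This is precisely why the theorem is stated for the first pass only: on later passes the new coordinates are no longer $0$ when the corresponding example is revisited.)

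Next I would check that the augmented instance is linearly separable w.r.t.\ $JJ$. From $\textbf{u}\cdot\Delta\phi(x^i,y^i,m^J(z,y^i))\ge {r^i}^{JJ}$ (Definition~\ref{def:general}) and the choice of $\epsilon_i'$,
\[
\widetilde{\textbf{u}}\cdot\widetilde{\Delta\phi}(x^i,y^i,m^J(z,y^i))\ \ge\ {r^i}^{JJ}+\epsilon_i'\ \ge\ \delta+r^{diff},
\]
so $\widetilde{\textbf{u}}/\|\widetilde{\textbf{u}}\|$ separates it with margin $(\delta+r^{diff})/\|\widetilde{\textbf{u}}\|$. By Observation~2 and the definition of $r^{diff}$, ${r^i}^{JJ}\ge r^i+r^{diff}$, hence $\epsilon_i'\le\max\{0,\delta-r^i\}=\epsilon_i$ and $\|\widetilde{\textbf{u}}\|^2=1+\tfrac1{\Delta^2}\sum_i(\epsilon_i')^2\le 1+D_{\textbf{u},\delta}^2/\Delta^2$; also the mixed-assignment radius of the augmented data obeys $\|\widetilde{\Delta\phi}(x^i,y^i,m^J(z,y^i))\|^2\le (R^{JJ})^2+\Delta^2$. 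Feeding these into Theorem~\ref{theorem:separable} bounds the first-pass update count by
\[
\frac{\bigl((R^{JJ})^2+\Delta^2\bigr)\bigl(1+D_{\textbf{u},\delta}^2/\Delta^2\bigr)}{(\delta+r^{diff})^2}
=\frac{\Delta^2+(R^{JJ})^2+D_{\textbf{u},\delta}^2+(R^{JJ})^2 D_{\textbf{u},\delta}^2/\Delta^2}{(\delta+r^{diff})^2},
\]
and minimizing the numerator over $\Delta^2$ (AM--GM, minimum at $\Delta^2=R^{JJ}D_{\textbf{u},\delta}$) gives $(R^{JJ}+D_{\textbf{u},\delta})^2/(\delta+r^{diff})^2$. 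Minimizing over $(\textbf{u},\delta)$ yields the SWVP bound; the $JJ_{x^i}=\{[L_{x^i}]\}$ specialization gives the CSP bound.

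The step I expect to be the main obstacle is the bookkeeping in the augmentation: justifying rigorously that the $n$ extra coordinates are inert during the first pass --- so the coupled runs produce identical inference, $I^{v}\cup I^{nv}$, $\gamma$, and original-coordinate updates --- and that this inertness keeps the $\gamma$ selection conditions (in particular condition~(2)) valid for the augmented run's first pass, which is what licenses applying (the argument of) Theorem~\ref{theorem:separable} there. Everything after that is the standard Novikoff/Collins packing computation, now relative to mixed assignments (margin $\delta^{JJ}$, radius $R^{JJ}$) rather than complete labels, plus the one-variable optimization over $\Delta$.
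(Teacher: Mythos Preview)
Your proposal is correct and takes essentially the same route as the paper: Collins' feature-augmentation to reduce to the separable case, then Theorem~\ref{theorem:separable}, then minimize over $\Delta$. The only notable difference is cosmetic --- you use slacks $\epsilon_i'=\max\{0,\delta+r^{diff}-{r^i}^{JJ}\}$ where the paper uses $\epsilon_i=\max\{0,\delta-r^i\}$; since $\epsilon_i'\le\epsilon_i$ your norm bound is at worst the paper's, both choices give the same augmented margin $\delta+r^{diff}$, and your first-pass equivalence argument is more explicit than the paper's one-line justification.
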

As $R^{JJ} \leq R$ (Observation 1) and $r^{diff} \geq 0$, we get a tighter bound for SWVP. The proof for \#mistakes-CSP is given at \cite{CollinsPerceptron}. The proof for \#mistakes-SWVP is given below.
\begin{proof}
We transform the representation $\phi(x,y)\in \mathbb{R}^d$ into 
a new representation $\psi(x,y)\in \mathbb{R}^{d+n}$ as follows: 
for $i=1,...,d: \psi_i(x,y)=\phi_i(x,y)$, for $j=1,...,n: \psi_{d+j}(x,y)=\Delta$ if $(x,y)=(x^j,y^j)$ 
and $0$ otherwise, where $\Delta>0$ is a parameter.\\
Given a $\textbf{u},\delta$ pair define $\textbf{v}\in \mathbb{R}^{d+n}$ as follows: 
for $i=1,...,d: \textbf{v}_i=\textbf{u}_i$, 
for $j=1,...,n: \textbf{v}_{d+j}=\frac{\epsilon_j}{\Delta}$.\\
Under these definitions we have:
\[\textbf{v}\cdot\psi(x^i,y^i)-\textbf{v}\cdot\psi(x^i,z) \ge  \delta, \quad \forall i,z\in \mathcal{Y}(x^i).\]
For every $i,z\in \mathcal{Y}(x^i), J \in JJ_{x^i}:$
\[\textbf{v}\cdot\psi(x^i,y^i)-\textbf{v}\cdot\psi(x^i,m^J(z,y^i))\ge \delta + r^{diff}.\]
\[\|\psi(x^i,y^i)-\psi(x^i,m^J(z,y^i))\|^2 \le ({R^{JJ}})^2+\Delta^2.\]
Last, we have,
\[\|\textbf{v}\|^2=\|\textbf{u}\|^2+\sum\limits_{i=1}^{n}\frac{\epsilon_i^2}{\Delta^2} = 1+\frac{D_{\textbf{u},\delta}^2}{\Delta^2}.\]

We get that the vector $\frac{\textbf{v}}{\|\textbf{v}\|}$ linearly separates the data with respect to single decision assignments 
with margin $\frac{\delta}{\sqrt{1+\frac{D_{U,\delta}^2}{\Delta^2}}}$.
Likewise, $\frac{\textbf{v}}{\|\textbf{v}\|}$ linearly separates the data with respect to mixed assignments with $JJ$,
with margin $\frac{\delta + r^{diff}}{\sqrt{1+\frac{D_{\textbf{u},\delta}}{\Delta^2}}}$.
Notice that the \textbf{first} pass of SWVP with representation $\Psi$ is identical to the first pass with representation $\Phi$ because the parameter weight for the additional features affects only a single example of the training data and do not affect the classification of test examples. By theorem 1 this means that the \textbf{first} pass of SWVP with 
representation $\Psi$ makes at most $\frac{(({R^{JJ}})^2+\Delta^2)}{(\delta + r^{diff})^2}\cdot\big( 1+\frac{D_{\textbf{u},\delta}^2}{\Delta^2} \big)$.\\
We minimize this w.r.t $\Delta$, which gives: $\Delta = \sqrt{R^{JJ}D_{\textbf{u},\delta}}$, and 
obtain the result guaranteed in the theorem.
\end{proof}

We have bounded the number of mistakes SWVP is making in an on-line setup. 
We next provide guarantees as to how well the algorithm generalizes to a new example.

\paragraph{Generalization Bound}
Let us consider the training set $D$ as an ordered sequence: $D = \{(x^1,y^1),\ldots,(x^n,y^n)\}$, and let us run the SWVP online algorithm on this sequence. At each round $t=1,\ldots,n$, the algorithm may update the weight vector $\textbf{w}$, so we get a sequence of weight vectors $\textbf{w}^1,\ldots,\textbf{w}^n$, from which we can create an hypotheses sequence of the form $h^t(x)=\arg\max_{y' \in \mathcal{Y}(x)} \textbf{w}^t \cdot \phi(x,y')$. 

To check the algorithm success in generalizing to a new test example $(x^{n+1},y^{n+1})$, we need to decide which hypothesis to use from the above sequence, under the assumption that both the training examples and the new test example are drawn i.i.d from an (unknown) distribution $P(x,y)$.

\newcite{Freund:9l} presented the voted perceptron, a batch variant of the perceptron algorithm, and \cite{CollinsPerceptron} presented an approximation for this variant called the averaged parameters perceptron that holds the same generalization guarantees. 
We adapt the averaged parameters setting to our algorithm. The resulting adaptation of \cite{Freund:9l} then states:
\begin{theorem}[Freund \& Schapire 99]
Assume all examples are generated i.i.d. at random. Let $(x^1,y^1),\ldots,(x^n,y^n)$ be a sequence of training examples and let $(x^{n+1},y^{n+1})$ be a test example. 
For a pair $\textbf{u}$, $\delta$ such that $\|\textbf{u}\|=1$ and $\delta>0$ 
define $D_{\textbf{u},\delta}$ as before. Then the probability (over the choice of $n+1$ examples) that the voted SWVP algorithm does not predict $y^{n+1}$ on test instance $x^{n+1}$ is at most
\[\frac{2}{n+1}\mathbb{E}_{n+1} \Bigg( \inf\limits_{\textbf{u}:\|\textbf{u}\|=1,\delta>0} \frac{(R^{JJ}+D_{U,\delta})^2}{(\delta  + r^{diff})^2} \Bigg)\]
where $\mathbb{E}_{n+1}$ is an expected value taken over $n+1$ examples.
\end{theorem}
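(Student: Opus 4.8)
The plan is to obtain the generalization bound from the SWVP online mistake bound (Theorem~\ref{mistake_bound}) by the online-to-batch conversion of \newcite{Freund:9l} for the voted perceptron, whose guarantee is inherited by the averaged-parameters variant of \newcite{CollinsPerceptron} adapted here. Fix a sample $(x^1,y^1),\dots,(x^{n+1},y^{n+1})$ drawn i.i.d.\ from $P$, let $\textbf{w}^1=0,\textbf{w}^2,\dots,\textbf{w}^{n+1}$ be the weight vectors produced by a \emph{single} pass of SWVP on it, and let $h^t$ be the corresponding argmax hypotheses as defined above.

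First I would record that SWVP is \emph{conservative}: the update $\textbf{w}^t\mapsto\textbf{w}^{t+1}$ is performed, and depends only on $\textbf{w}^t$ and on the single pair $(x^t,y^t)$, exactly on the rounds $t$ where $h^t$ misclassifies $(x^t,y^t)$ (i.e.\ $y^{*}\neq y^t$). Hence for every realization the number of updates in the pass, $M_{n+1}$, equals $\sum_{t=1}^{n+1}\mathbf{1}[\,h^t\text{ errs on }(x^t,y^t)\,]$. Since $\textbf{w}^t$ (and thus $h^t$) depends only on $(x^1,y^1),\dots,(x^{t-1},y^{t-1})$, it is independent of $(x^t,y^t)$, which is distributed as a fresh test example; so $\Pr[h^t\text{ errs on }(x^t,y^t)]$ is the true risk of $h^t$, and averaging over $t\sim\mathrm{Unif}\{1,\dots,n+1\}$ shows that the randomized predictor ``draw $t$, output $h^t$'' has expected risk exactly $\tfrac{1}{n+1}\mathbb{E}[M_{n+1}]$.

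Next I would pass from this randomized predictor to the deterministic averaged/voted SWVP hypothesis by the Markov-type voting step of \newcite{Freund:9l}: a (survival-weighted) majority vote over $h^1,\dots,h^{n+1}$ errs on a point only when at least half the weight sits on hypotheses that err there, so the voted predictor's error is at most $2$ times the randomized predictor's, giving $\Pr[\text{voted SWVP errs on }x^{n+1}]\le \tfrac{2}{n+1}\mathbb{E}[M_{n+1}]$. Finally, because the pass over the $n+1$ examples is a first pass, Theorem~\ref{mistake_bound} applies verbatim (with $n+1$ in place of $n$) and bounds $M_{n+1}\le \inf_{\|\textbf{u}\|=1,\delta>0}\frac{(R^{JJ}+D_{\textbf{u},\delta})^2}{(\delta+r^{diff})^2}$ for every realization, the quantities $R^{JJ},D_{\textbf{u},\delta},r^{diff}$ being computed from the $n+1$ sampled examples. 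Taking expectations over the sample and substituting into the previous inequality yields the stated bound.

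I expect the main obstacle to be the passage from an online mistake count to a \emph{batch} generalization statement for the deterministic averaged/voted hypothesis — i.e.\ making the leave-one-out identity and the factor-$2$ voting step of \newcite{Freund:9l} and \newcite{CollinsPerceptron} rigorous in the structured, argmax-based setting rather than the binary one (in particular, that only one pass is run, so Theorem~\ref{mistake_bound} is exactly the right input). Everything after that is bookkeeping: the replacement of $R$ by $R^{JJ}$ and of $\delta$ by $\delta+r^{diff}$ is the one already carried out via Observation~1 and Observation~2 inside Theorem~\ref{mistake_bound}, and the representation change $\phi\to\psi$ used there has already been checked to be compatible with the $MA$ construction.
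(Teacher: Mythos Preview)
Your proposal is correct and follows exactly the route the paper takes: invoke the Freund--Schapire online-to-batch conversion for the voted/averaged perceptron, feeding in the SWVP first-pass mistake bound of Theorem~\ref{mistake_bound} (which already carries the $R\to R^{JJ}$ and $\delta\to\delta+r^{diff}$ improvements via Observations~1 and~2). In fact the paper does not spell out a proof at all---it simply states the theorem as the direct adaptation of \newcite{Freund:9l} (as inherited by the averaged-parameters variant of \newcite{CollinsPerceptron}) to SWVP---so your sketch is already more detailed than what appears there; your identification of the leave-one-out/voting step as the only nontrivial ingredient is accurate, and that step is precisely what is being imported wholesale from \newcite{Freund:9l}.
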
 

Note that the adaptation of \cite{Freund:9l} to the original CSP algorithm provided by \cite{CollinsPerceptron} gives the generalization bound of 
$\frac{2}{n+1}\mathbb{E}_{n+1} \left( \inf\limits_{\textbf{u}:\|\textbf{u}\|=1,\delta>0} \frac{(R+D_{U,\delta})^2}{\delta^2}\right).$ 
This means that the generalization bound of SWVP is upper bounded 
by the generalization bound of CSP (convergence property 1 and theorem 2).

%\end{appendices}
\section*{Acknowledgments}
The second author was partly supported by a research grant from the GIF Young Scientists' Program (No. I-2388-407.6/2015): Syntactic Parsing in Context.
\bibliography{emnlp2016}
\bibliographystyle{emnlp2016}

\end{document}